\DeclareMathOperator{\E}{\mathbb{E}}
\newcommand{\alg}[1]{\texttt{#1}}
\newcommand{\vx}{\mathbf{x}}
\newcommand{\vy}{\mathbf{y}}
\newcommand{\eg}{e.g.}
\newcommand{\ie}{i.e.}
\newcommand{\Do}[1]{\todo[inline]{\textbf{TODO:}#1}}
\newcommand{\ouralg}{\alg{\textsc{EmbeddedHunter}}}
\newcommand{\optcell}{$\mathcal{Y}_{h,i^*_p}$}
\newcommand{\cell}{$\mathcal{Y}_{h,i}$}
\newcommand{\lcb}[2]{f^*_{#1,#2} - \tau(#1)- \delta(#1)} 
\newcommand\myeq{\mathrel{\stackrel{\makebox[0pt]{\mbox{\normalfont\tiny def}}}{=}}}
\newtheorem{lemma}{Lemma}
\newtheorem{assumption}{Assumption}
\newtheorem{theorem}{Theorem}
\newtheorem{definition}{Definition}
\begin{document}
%
\title{
	Embedded Bandits for Large-Scale Black-Box Optimization
	}

\author{Abdullah Al-Dujaili$^1$ \and S. Suresh$^{1,2}$ 
	\\ $^1$ School of Computer Science and Engineering, NTU,  Singapore \\
 $^2$ ST Engineering - NTU Corporate Lab, Singapore\\
 \tt aldujail001@e.ntu.edu.sg, ssundaram@ntu.edu.sg
}
\maketitle
\begin{abstract}
	Random embedding has been applied with empirical success to large-scale black-box optimization problems with low effective dimensions. This paper proposes the \ouralg~algorithm, which incorporates the technique in a hierarchical stochastic bandit setting, following the \emph{optimism in the face of uncertainty} principle and breaking away from the \emph{multiple-run} framework in which random embedding has been conventionally applied similar to stochastic black-box optimization solvers. 
	Our proposition is motivated by the bounded mean variation in the objective value for a low-dimensional point projected randomly into the decision space of Lipschitz-continuous problems. In essence, the \ouralg~algorithm expands optimistically a partitioning tree over a low-dimensional---equal to the effective dimension of the problem---search space based on a bounded number of random embeddings of sampled points from the low-dimensional space.
	In contrast to the probabilistic theoretical guarantees of multiple-run random-embedding algorithms, the finite-time analysis of the proposed algorithm~presents a theoretical upper bound on the regret as a function of the algorithm's number of iterations.  Furthermore, numerical experiments were conducted to validate its performance. The results show a clear performance gain over recently proposed random embedding methods for large-scale problems, provided the intrinsic dimensionality
	is low.
	
\end{abstract}

\section{Introduction}
\label{sec:intro}

\paragraph{Problem.} This paper is concerned with the \emph{large-scale black-box} optimization problem given a {finite number of function evaluations}. Mathematically, the problem has the form:
\begin{equation}
\begin{aligned}
& {\text{minimize}}
& & f(\vx) \\
& \text{subject to}
& & \vx \in \mathcal{X}\;,
\end{aligned}
\label{eq:problem_def}
\end{equation}
where  $f:\mathcal{X}\subseteq \mathbb{R}^n\to \mathbb{R}$ and $n\gg10^2$. Without loss of generality, it is assumed that $\mathcal{X}=[-1,1]^n$, and there exists at least one global optimizer $\vx^*$ whose  objective value is denoted by $f^*$, \ie, $\min_{\vx\in \mathcal{X}}f(\vx)=f(\vx^*)=f^*$. 
  Solving the optimization problem~\eqref{eq:problem_def} is notoriously difficult as the sole source of information about its objective function $f$
is available through a \emph{black-box} or an \emph{oracle}, which one can query for the value of $f$ at a specific solution (point ) $\vx \in \mathcal{X}$. High-order information~(\eg, derivatives) are unavailable symbolically nor numerically or are tedious to compute compared to zero-order information---\ie, point-wise function evaluations. Thus, the task is to find the (or one) optimal solution $\vx^* \in \mathcal{X}$ to~\eqref{eq:problem_def} or a good approximation using a finite number~$v$ of function evaluations, which is commonly referred to as the evaluation budget. The quality of the returned solution~$\vx(v)\in \mathcal{X}$ after $v$ function evaluations, denoted by $f^*_v$, is  assessed by the \emph{regret},
\begin{equation}
r(v) = f^*_v - f^*\;.\label{eq:simple-regret}
\end{equation}
Besides the aforementioned challenging nature of black-box problems, the high dimensionality $n$ of the decision space $\mathcal{X}$ poses another challenge towards finding the global optimum. Despite their witnessed success, the effectiveness of most black-box optimization algorithms is restricted to moderate dimensions (typically, $n<100$) and they do not scale well to high-dimensional (say, $n\gg 10^2$) problems. As the dimensionality increases, the number of evaluations (sampled points) required to cover $\mathcal{X}$ increases \emph{exponentially}. 

Despite the curse of dimensionality, it has been noted that for artificial intelligence (AI) applications, most dimensions of certain classes of the associated optimization problems do not affect the objective function significantly. In other words, such problems have \emph{low effective dimensionality}, e.g., hyper-parameter optimization for neural and deep belief networks~\cite{bergstra2012random}.

\vspace{0.5em}
\noindent\textbf{Related Work.} The literature on black-box optimization is huge and we only highlight here works that are closely related to the paper's contribution. The bulk of algorithmic work on large-scale black-box optimization has been following one of two approaches: \emph{decomposition} and \emph{embedding}. 

\textit{Decomposition} algorithms break the problem into several subproblems, and solutions for the original problem are recognized in a coordinated
	manner. In \cite{kandasamy2015high}, Bayesian optimization was scaled to high-dimensional problems whose objectives have an additive structure. \ie, the function $f$ is the sum of several sub-functions with smaller dimensions, such that no two sub-functions share one or more variables. On the other hand, \citeauthor{friesen2015recursive}~(\citeyear{friesen2015recursive}) proposed to decompose the function into \emph{approximately locally independent} sub-functions and optimize them separately.~\citeauthor{chen2010large}~(\citeyear{chen2010large}) addressed \emph{interdependent} sub-functions and proposed to consider all entries of the decision vector $\vx$ independent  and discover their relations gradually. 
	In general, decomposition methods employ axis-aligned decomposability, which may limit their applicability.

\textit{Embedding} algorithms exploit the assumption/empirical observation of \emph{low effective dimensionality}. \citeauthor{ICML2012Chen_714}~(\citeyear{ICML2012Chen_714}) presented a variable selection method to discover the effective axis-aligned subspace, while \citeauthor{Djolonga-high-bandits}~(\citeyear{Djolonga-high-bandits}) sought to learn the effective subspace using a low-rank matrix recovery technique. In \cite{carpentier2012bandit}, compressed sensing was applied to deal with linear-bandit problems with a high degree of sparsity. Recent works---motivated by the empirical success of random search in leveraging low effective dimensionality without knowing which variables are important~\cite{bergstra2012random}---presented \emph{random embedding} techniques based on the random matrix theory~\cite{wang2013bayesian,kaban} and provided probabilistic theoretical guarantees. In~\cite{qian2016scaling}, the Simultaneous Optimistic Optimization~(\alg{SOO}) algorithm~\cite{munos-soo} was scaled via random embedding. Problems, whose all dimensions are effective but many of them have a small bounded effect, were addressed in~\cite{qianderivative} where the random embedding technique was incorporated in a sequential framework. In general, random embedding methods employ multiple runs to substantiate the probabilistic theoretical performance. 

\vspace{0.5em}
\noindent\textbf{Our Contributions.} This paper aims to tackle large-scale black-box optimization~\eqref{eq:problem_def} based on the random embedding technique. Previous propositions put the technique in a framework of multiple runs---be it parallel~\cite{qian2016scaling} or sequential~\cite{qianderivative}---to maximize the performance guarantee. In this paper, we seek to break away from the \emph{multiple-run} framework and follow the \emph{optimism in the face of uncertainty} principle, or so-called \emph{optimistic optimization}. To this end, we incorporate the random embedding technique in a stochastic hierarchical bandit setting and present \ouralg: an algorithmic instance of the sought approach. Similar to other optimistic methods, \ouralg~iteratively expands a partitioning tree over a low-dimensional space $\mathcal{Y}$ based on  randomly projecting sampled points to the original high-dimensional space $\mathcal{X}$ once or more times. This approach is motivated by the proof that the mean variation in the objective function $f$ value for a point $\vy \in \mathcal{Y}$ projected randomly to $f$'s decision space $\mathcal{X}$ is bounded for objective functions that are Lipschitz-continuous. \ouralg's regret~\eqref{eq:simple-regret} is upper bounded in terms of the number of iterations required to expand near-optimal nodes in the (effective) low-dimensional space based on the Lipschitz continuity assumption and that random embedding can preserve local distance.


The rest of the paper is organized as follows. First, a formal motivation is presented, followed by an introduction to \ouralg. Then, the algorithm's finite-time performance is studied and complemented by an empirical validation. Towards the end, the paper is concluded.



\section{Optimistic Optimization Meets Random Embeddings}
\label{sec:motivation}

Optimistic methods, \ie, methods that implement the \emph{optimism in the face of uncertainty} principle have proved to be viable for black-box optimization. Such a principle finds its foundations in the machine learning field addressing the exploration-vs.-exploitation dilemma, known as the multi-armed bandit problem. 
 Within the
context of function optimization, optimistic approaches formulate the complex
problem of optimization~\eqref{eq:problem_def} over the space~$\mathcal{X}$ as a hierarchy of simple
bandit problems~\cite{kocsis2006bandit} in the form of space-partitioning tree search. At step $t$, the algorithm
optimistically expands a leaf node (partitions the corresponding subspace) that
may contain the global optimum. Previous empirical studies have shown that optimistic methods---e.g.,~\alg{SOO}~\cite{munos-soo} and \alg{NMSO}~\cite{AlDujaili2016nmso}---are not suitable for problems with high dimensionality.

Random embedding has emerged as a practical tool for large-scale optimization with an experimental success and probabilistic theoretical guarantees. It assumes the problem~\eqref{eq:problem_def} has an implicit low effective dimension $d$ much lower than the explicit (original) dimension $n$. In essence, for an optimizer $\vx^*\in \mathcal{X}=[-1,1]^n$ and a random matrix $A\in\mathbb{R}^{n\times d}$ whose entries are sampled independently from a normal distribution, there exists a point $\vy^*\in \mathcal{Y}=[-d/\eta, d/\eta]^d$ such that its Euclidean random projection to $\mathcal{X}$, $\mathcal{P}_{\mathcal{X}}(A\vy^*)$, is $\vx^*$ with a probability at least $1-\eta$ where $\eta\in(0,1)$. That is to say, 
$f(\mathcal{P}_{\mathcal{X}}(A\vy^*))=f(\vx^*)=f^*$. The Euclidean random projection of the $i$th coordinate $[\vy]_i$ to $[\mathcal{X}]_i$ is defined as follows.
\begin{equation}
[\mathcal{P}_{\mathcal{X}}(A\vy)]_i= 
\begin{cases}
1,& \text{if } [\vy]_i\geq 1;\\
-1,              & \text{if } [\vy]_i\leq -1;\\
[A\vy]_i		& \text{otherwise}.
\end{cases}
\label{eq:euclidean-projection}
\end{equation}
The reader can refer to \cite{wang2013bayesian,qian2016scaling} for more details and a formal treatment of the above.

It was shown that is possible to scale up optimistic methods via random embedding~\cite{qian2016scaling,qianderivative}. Nevertheless, one can observe that it has been applied in a \emph{multiple-run} framework, where (multiple) $M$ random embeddings  are applied on the \emph{same} low-dimensional search space $\mathcal{Y}$ in parallel or sequentially to increase the success rate $1-\eta^M$, ignoring the relationship among the function $f$ values at the multiple projections in $\mathcal{X}$ of a single point $\vy\in \mathcal{Y}$. In Theorem~\ref{lem:gy_def}, we show that a relationship can be established for Lipschitz functions. Prior to that, let us introduce some notation and state the Lipschitz condition formally.

\vspace{0.5em}
\noindent\textbf{Notation.} Let $\mathcal{N}$ denote the Gaussian distribution  with zero mean and $1/n$ variance, and $\{A_p\}_p\subseteq \mathbb{R}^{n\times d}$, with $d\ll n$, be a sequence of realization matrices of the random matrix $\mathbf{A}$ whose entries are sampled independently from $\mathcal{N}$. Furthermore, let $g_P(\vy)$ be a random (stochastic) function such that $g_P(\vy)\myeq f(\mathcal{P}_{\mathcal{X}}(\mathbf{A}\vy))$ and $g_p(\vy)=f(\mathcal{P}_\mathcal{X}(A_p\vy))$ is a realization (deterministic) function, where $\vy\in \mathcal{Y}\subseteq \mathbb{R}^d$. On the other hand, let us define $\ell(\vx_1,\vx_2)$ as $L\cdot ||\vx_1 - \vx_2 ||$, where $||\cdot||$ denotes the $L_2$-norm. The expectation of a random variable $X$ is denoted by $\mathbb{E}[X]$. 
\begin{assumption} $f$ is Lipschitz-continuous, \ie, $\forall \vx_1, \vx_2 \in \mathcal{X}$,
	\begin{equation}
	|f(\vx_1)-f(\vx_2)|\leq L \cdot ||\vx_1 - \vx_2||\;,\label{eq:lipschitz}
	\end{equation}
	where $L>0$ is the Lipschitz constant.\label{assmptn:lipschitz}
\end{assumption}
\begin{theorem}[Mean absolute difference for $g_{P}(\vy)$] $\forall y \in \mathcal{Y}\subseteq \mathbb{R}^d$, we have
	$E[|g_p(\vy) - g_q(\vy)|]\leq \sqrt{8}\cdot L \cdot ||\vy||\;.$\label{lem:gy_def}
\end{theorem}
\begin{proof} From Assumption~\ref{assmptn:lipschitz}, we have
	\begin{align}
	\E[|g_p(\vy) - g_q(\vy)|] = &\E[|f(\mathcal{P}_{\mathcal{X}}(A_p\vy)) - f(\mathcal{P}_{\mathcal{X}}(A_q\vy))|]\nonumber\\
	\leq & L \cdot \E[|| \mathcal{P}_{\mathcal{X}}(A_p\vy) - \mathcal{P}_{\mathcal{X}}(A_q\vy)   ||]\;. \nonumber \\
	\intertext{From the definition of the Euclidean projection~\eqref{eq:euclidean-projection}:}
	\E[|g_p(\vy) - g_q(\vy)|]\leq & L \cdot \E[|| A_p\vy- A_q\vy  ||] \nonumber \\
	\intertext{Thus, from Cauchy's inequality, we have}
	\E[|g_p(\vy) - g_q(\vy)|] \leq & L \cdot ||\vy|| \cdot \E[||A_p- A_q||] \label{eq:norm2-rand} \nonumber\\
	\leq & L \cdot ||\vy|| \cdot  \sqrt{\frac{8}{n}}\cdot \sqrt{\max(n,d)}\\
	\leq & \sqrt{8}\cdot L \cdot  ||\vy||\;, \nonumber
	\end{align}
	where~\eqref{eq:norm2-rand} is derived from \cite{hansen19882}.
\end{proof}
Theorem~\ref{lem:gy_def} says  that the variation in the function $f$ values at points in $\mathcal{X}$ projected randomly from the same low-dimensional point $\vy \in \mathcal{Y}$ is bounded on the order of the point's norm $||\vy||$. Indeed, the $d$-dimensional zero vector (center of $\mathcal{Y}$) will always give the same function value (zero variation), regardless of the random matrix used. As a result, one is motivated to project a point $\vy$ multiple times proportional to its norm in search for the optimal solution $\vx^*$. Next, we provide \ouralg: a novel scalable optimistic algorithm that exploits the above result.

\section{\ouralg}
\label{sec:body1}

\ouralg~is a space-partitioning tree-search algorithm that constructs iteratively finer and finer partitions of the (effective) low-dimensional space $\mathcal{Y}$ in a hierarchical fashion looking for the global optimum. The hierarchical partitioning can be represented by a $K$-ary tree $\mathcal{T}$, where nodes of the same depth $h$ correspond to a partition of $K^h$ subspaces/cells. \ie, the $i$th node at depth $h$, denoted by $(h,i)$, corresponds to the subspace/cell $\mathcal{Y}_{h,i}$ such that $\mathcal{Y}=\cup_{0\leq i < K^h}\mathcal{Y}_{h,i}$. To each node $(h,i)$, a base point $\vy_{h,i}$ (center of $\mathcal{Y}_{h,i}$) is assigned at which $f$ is evaluated once or more times.  That is to say, for every new evaluation of the node $(h,i)$, $\vy_{h,i}$ is randomly projected to $f$'s decision space $\mathcal{X}$ via a random matrix \eqref{eq:euclidean-projection} and gets evaluated. To expand/split a node, the corresponding cell is partitioned into $K$ subscells along one of $\mathcal{Y}$'s coordinates, one coordinate at a depth in a sequential manner. Moreover, let the set of leaf nodes of $\mathcal{T}$ be denoted as $\mathcal{L}$. Furthermore, one can denote the algorithm's tree $\mathcal{T}$ at step $t$ by $\mathcal{T}_{t}$. Towards the detailed aspects of the algorithm, some assumptions are made about the hierarchical partitioning in line with Assumption~\ref{assmptn:lipschitz} and Theorem~\ref{lem:gy_def}, relating variations in function $f$ values using the same and different projection matrices, respectively.

\vspace{0.5em}
\noindent\textbf{Function values within the same projection.} Based on the 
Johnson-Lindenstrauss Lemma~\cite{Achlioptas2003671,vempala2005random}; for a set of $m$ points $\{\vy^i\}_{0\leq i \leq m}\subset \mathcal{Y}$ and their projections $\{\vx^i\}_{0\leq i \leq m}\subset \mathcal{X}$ via the same matrix, we have $\ell(\vx^i,\vx^j) \leq (1+\epsilon)^{1/2} \cdot \ell(\vy^i,\vy^j)$, where $\epsilon \in (0,1/2]$ and $n > 9 \ln m / (\epsilon^2-\epsilon^3)$---see \cite[Lemma 3]{qian2016scaling}.

In other words, the random embedding can probably preserve local distance. Thus, from Assumption~\ref{assmptn:lipschitz}, the difference between the function $f$ values of two points in the low-dimensional space $\mathcal{Y}$---using the same projection matrix---is on the order of their distance in $\mathcal{Y}$. The next assumption exploits the above observation with respect to the optimal cell (node), which is defined as follows.

\begin{definition}[Optimal cell] A cell \cell~at depth $h\geq 0$ is optimal if there exists a random matrix $A_p$ whose entries are sampled independently from $\mathcal{N}$ such that $\min_{\vy\in \mathcal{Y}_{h,i}}g_p(\vy)=f(\vx^*)$, where $\vx^*$ is a global optimizer of $f$. We denote such a cell by \optcell and its node by $(h,i^*_p)$.
\end{definition}
\begin{assumption}[Bounded intra-variation] There exists a decreasing sequence $\delta$ in $h\geq 0$ such that for one (or more) optimal cell(s) \optcell~at depth $h$, we have
	$$0\leq \sup_{q,\vy \in \mathcal{Y}_{h,i^*_p}}|g_q(\vy_{h,i})-g_q(\vy)|\leq \delta(h)\;.$$\label{assmptn:deltah}
\end{assumption}
As the hierachical partitioning is performed coordinate-wise in a sequential manner, let us link the fact that the cell's shapes are not skewed in some dimensions with  Assumption~\ref{assmptn:deltah} through the next assumption.

\begin{assumption}[Well-shaped cells]
	\label{assmptn:rounded}	$\exists$ $m>0$ such that
	$\forall (h,i) \in \mathcal{T}$,~$\mathcal{Y}_{h,i}$ contains an $\ell$-ball of radius $m\delta(h)$ centered in $\vy_{h,i}$. 
\end{assumption}
\vspace{0.5em}
\noindent\textbf{Function values among different projections.} Now, we state another assumption about the optimal cell \optcell~in line with Theorem~\ref{lem:gy_def}.
\begin{assumption}[Bounded inter-variation]
	There exists two non-decreasing sequences $\lambda$ and $\tau$ in $\vy$ and $h$, respectively, such that for any depth $h\geq 0$, for any optimal cell \optcell,  $$0\leq \sup_{s,t}|g_s(\vy_{h,i^*_p})-g_t(\vy_{h,i^*_p})|\leq \lambda(\vy_{h,i^*_p})\;,$$
	and  $\sup_{i^*_p} \lambda(\vy_{h,i^*_p}) \leq \tau(h)\;.$\label{assmptn:lambday}
\end{assumption}
Note that $\lambda$ being bounded by $\tau$ in $h$ is due to the nature of the hierarchical partitioning: the maximum norm of base points at depth $h$ is smaller than or equal to those at depth $h+1$. \eg, at depth $h=0$, there is a single node $(0,0)$ whose base point is the $d$-dimensional zero vector centered in $\mathcal{Y}$. As the tree $\mathcal{T}$ goes deeper, more base points farther away from the center---and hence greater norms---are sampled.

Combining Assumptions~\ref{assmptn:deltah} and \ref{assmptn:lambday} implies that the values of function $f$, which the optimal node's base point $\vy_{h,i^*_p}$ can have, are within $\tau(h)+\delta(h)$ from the global optimum $f^*$. One can therefore establish a lower confidence bound (commonly referred to as the $b$-value) on the $f$ values within a cell. Let $f^*_{h,i}$ be the best function $f$ value achieved among $\vy_{h,i}$ evaluations. Then, the $b$-value for $(h,i)$ can be written as $b_{h,i}\myeq \lcb{h}{i}$ 
. With the knowledge of the sequences $\tau$ and $\delta$, we can expand nodes whose $b$-values lower bound $f^*$, discarding other nodes and striking an efficient balance in exploration-vs-exploitation based on the lowest $\tau(h)+\delta(h)$ portion of the function space.

However, the knowledge of such sequences ($\delta,\lambda,\tau$) is not available/known in practice. Thus, we follow an optimistic approach and propose to simulate the knowledge of these sequences via two realization aspects of the algorithm. First, the tree $\mathcal{T}$'s nodes are visited based on their depths and their base points' norms: relating the depth-wise and norm-wise visits to the notion of intra-(same projection) and inter-(different projections) exploration-vs.-exploitation dilemmas, respectively. Second, as the tree $\mathcal{T}$ is swept across multiple depths and norms, a node $(h,i)$ is expanded only if its $f^*_{h,i}$ is strictly smaller than those of nodes of higher depths and those of nodes at the same depth but of greater or equal base points' norms.

Besides motivating the norm-wise traversal described above, Theorem~\ref{lem:gy_def} implies evaluating $f$ at the nodes' base points multiple times---each with a new random projection---in proportion to their norms as larger improvement over the current $f$ value is probable at points with greater norms. A tree $\mathcal{T}$ with an odd-numbered partition factor $K$ can seamlessly accommodate this observation because the center 
child node $(h+1,j)$ of a node $(h,i)$ shares the same base point as its parent $(h,i)$. Therefore, one can decide whether to evaluate a newly created center child node based on the number of function evaluations that its base point had in its ancestor nodes in relation to its norm.

In summary, Algorithm~\ref{alg:eh} describes \ouralg. The algorithm takes four parameters: i). the maximum depth $h_{max}$ up to which nodes can be expanded, it can be a function of the evaluation budget or the number of iterations similar to other optimistic methods; ii). the partition factor $K$, which has to be an odd number; iii). $\eta\in(0,1)$ to specify the bounds of the search space $\mathcal{Y}$ from the random projection theory; and iv). a multiplicative factor $M$ to bound the number of past function evaluations at a base point $\vy_{h,i}$ such that it is not greater than $M\cdot ||\vy_{h,i}||$, simulating Theorem~\ref{lem:gy_def}'s bound, $\sqrt{8}\cdot L \cdot ||\vy_{h,i}||$. Otherwise, no more evaluation is performed and the node retains its parent's best achieved function value~(performed at Line~\ref{ln:eval} of the algorithm). For the sake of readability, the following definitions were used in Algorithm~\ref{alg:eh}.
\begin{eqnarray}
\mathcal{L}_{t,h}&\myeq&\{(h,i)\mid 0\leq i < K^h,\; (h,i) \in \mathcal{L}_t\}\nonumber\\
\Gamma_{h,t}&\myeq&\{\gamma \in \mathbb{R}_0^+\mid \exists (h,i) \in  \mathcal{L}_{t,h}   \text{ such that } \gamma = ||\vy_{h,i}||  \}\nonumber\\
\mathcal{L}^{j}_{t,h}&\myeq&\{(h,i)\mid (h,i) \in \mathcal{L}_{t,h},\;\nonumber\\ &&||\vy_{h,i}||=\text{the $j$th   largest element} \in \Gamma_{h,t} \}
\end{eqnarray}

\newcommand{\Statey}{\State}
\begin{algorithm}[htp]
	\caption{The \ouralg~Algorithm} 	\label{alg:eh}
	\begin{algorithmic}[1]
	\Statex \textbf{Input}: 
		\Statex \hspace{1em} stochastic function $g_P$, 
		\Statex \hspace{1em} search space $\mathcal{Y}=[-d/\eta,d/\eta]^d$, 
		\Statex \hspace{1em} evaluation budget~$v$.
		\Statex \textbf{Initialization}: 
		\Statex \hspace{1em} $t\gets 1$,
	$\mathcal{T}_1=\{(0,0)\}$,
 Evaluate $g_P(\vy_{0, 0})$.
		\While{evaluation budget is not exhausted}
		\Statey  $\nu_{\min} \gets \infty$
		\For{$l=0$ \textbf{~to~}$\min\{\mbox{depth}{(\mathcal{T}_t}), h_{\max}\}$}
		\For{$j=1$ \textbf{~to~}$|\Gamma_{l,t}|$} \label{ln:beg_iteration}
		\Statey Select $(l, o)=\arg\min_{(h,i) \in \mathcal{L}^{j}_{t,l}}f^*_{h,i}$ \label{line:soo_qt}
		\If {$f^*_{l,o} < \nu_{\min}$}
		\Statey $\nu_{\min} \gets f^*_{l,o}$
		\Statey Expand $(l,o)$ into its child nodes
		\Statey Evaluate $(l,o)$'s child nodes by $g_P$\label{ln:eval}
		\Statey Add $(l,o)$'s child nodes to $\mathcal{T}_t$
		\EndIf
		\EndFor 
		\Statey $\mathcal{T}_{t+1}\gets \mathcal{T}_t$
		\Statey $t\gets t + 1$ \label{ln:end_iteration}
		\EndFor
		\EndWhile
		\Statey \textbf{return} $f^*_v=\min_{(h,i)\in \mathcal{T}_t} f^*_{h,i}$
	\end{algorithmic}
	
\end{algorithm}

\section{Theoretical Analysis}
\label{sec:theory}

In this section, we analyze the performance of the \ouralg~algorithm and upper-bound its regret~\eqref{eq:simple-regret}. To derive a bound on the regret, a measure of the quantity of near-optimal points is used, which is closely similar to those in  \cite{bubeck2009online,Aldujaili2016mso} and defined after introducing some terminology.  For any $\epsilon > 0$, define the set of $\epsilon$-optimal points as $\mathcal{Y}_{\epsilon}\myeq \{\vy \in \mathcal{Y}\mid \min_{p}g_p(\vy)\leq f^* + \epsilon  \}$ and let $g(\mathcal{Y}_{\epsilon})\myeq \{\min_{p}g_p(\vy)\mid \vy \in \mathcal{Y}_\epsilon \}=[f^*,f^*+\epsilon]$. Likewise, denote the set of \textit{$\epsilon$-optimal nodes} at depth $h$ whose base points are in $\mathcal{Y}_{\epsilon}$ by 
 $\mathcal{I}^{\epsilon}_h\myeq\{(h,i)\in \mathcal{T}\mid 0\leq i< K^h, \vy_{h,i}\in\mathcal{Y}_\epsilon \}$. 
After $t$ iterations, one can denote the depth of the \textit{deepest expanded optimal} node by $h^*_t$, where one iteration represents \emph{executing the lines~\ref{ln:beg_iteration}--\ref{ln:end_iteration} of Algorithm~\ref{alg:eh}, once}.

		\begin{definition}
			\label{def:near_optimal_dim}
			The \textbf{$m$-near-optimality dimension} is the smallest $d_m \geq 0$ such that there exists $C > 0$ such that for any $\epsilon > 0$, the maximum number of disjoint $\ell$-balls of radius $m\epsilon$ and center in $\mathcal{Y}_{\epsilon}$ is less than $C\epsilon^{-d_m}$.
		\end{definition}


Let the considered depth after $t-1$ iterations be $h$ and the depth of the deepest expanded optimal  node $h^*_{t-1}$ be $h-1$. At iteration $t$, \ouralg~would expand at most $|\Gamma_{h,t}|$ nodes. As the (any) optimal node at depth $h$ is in one of the $\{\mathcal{L}^j_{t,h}\}_{1\leq j \leq |\Gamma_{h,t}|}$ sets, the optimal node at depth $h$ is not expanded at iteration $t$ if $\nu_{\min} \leq f^*_{h,i^*_p}$ or if there exists a node $(h,i)\in \mathcal{L}_{h,t}$ such that $||\vy_{h,i}||\geq||\vy_{h,i^*_p}||$ and $f^*_{h,i} \leq f^*_{h,i^*_p}$. The latter condition implies  $f^*_{h,i} - f^*  \leq  f^*_{h,i^*_p} - f^*$ and by triangular inequality, we have $f^*_{h,i} - f^*  \leq  |f^*_{h,i^*_p} - g_p(\vy_{h,i^*_p})| + |g_p(\vy_{h,i^*_p}) - f^*|$. Hence, from Assumption~\ref{assmptn:lambday} and Assumption~\ref{assmptn:deltah}, $f^*_{h,i} - f^*  \leq  \tau(h) + \delta(h)\nonumber.$
Since $\tau$ and $\delta$ are non-decreasing and decreasing sequences in $h$, respectively. One can write $\tau(h)$ as a multiple of $\delta(h)$, that is, there exists $m_h \in \mathbb{Z}^+$ such that
\begin{align}
f^*_{h,i} - f^*  \leq& \tau(h) + \delta(h) \leq m_h \cdot \delta(h) \label{eq:mh}\;. 
\end{align}
	\begin{figure*}[t!]
		\centering
		\includegraphics[width=0.95\textwidth]{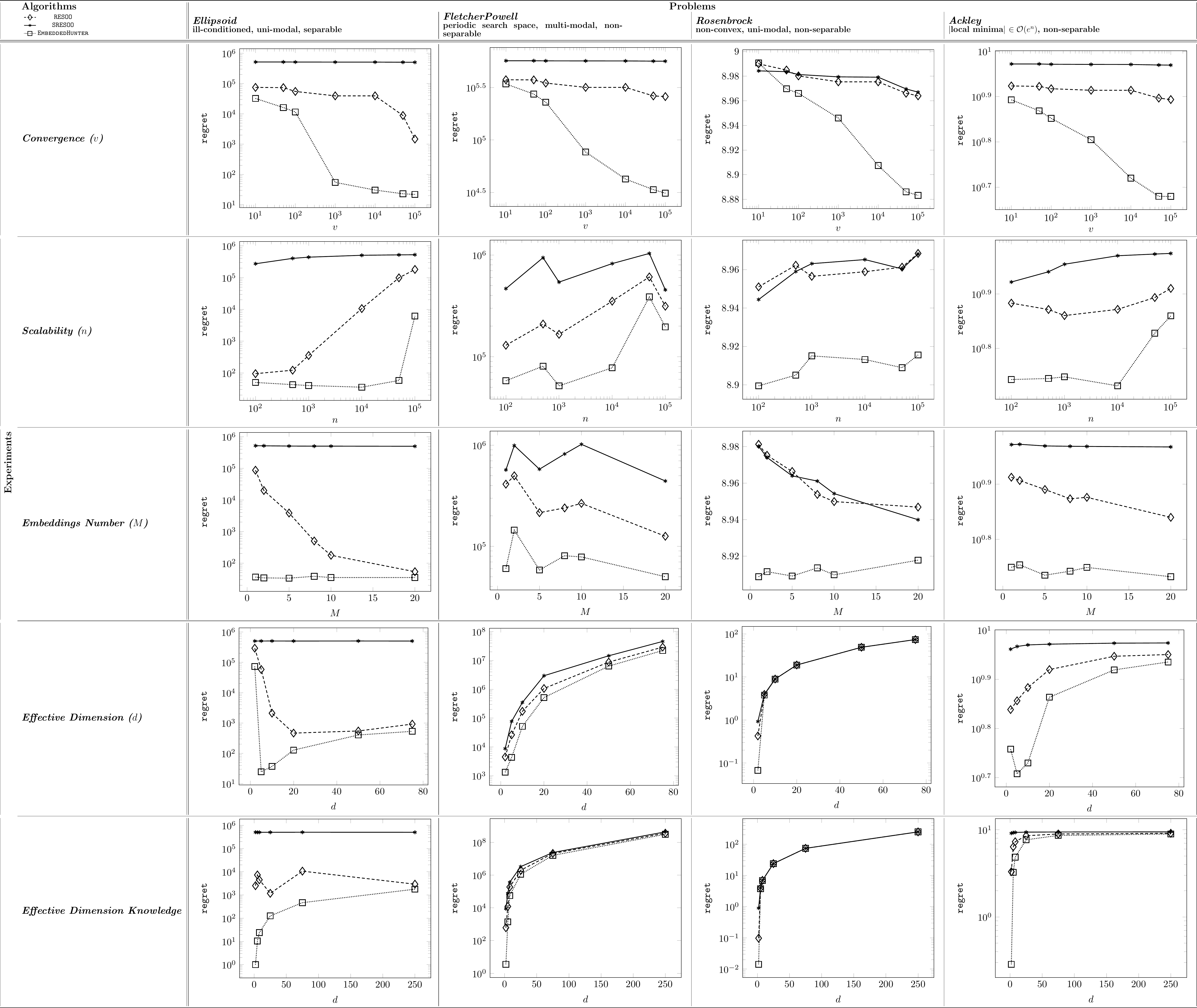}		
		\caption{Empirical validation of the \ouralg~algorithm on a set of commonly-used test optimization problems in comparison with recent large-scale techniques namely \alg{RESOO} and \alg{SRESOO}. Each data point of an algorithm's curve represents the mean performance of its 20 runs w.r.t. the experimental configuration considered.}
		\label{fig:empirical_table}
	\end{figure*}
Put it differently, when $h^*_{t-1}=h-1$, the base point of any node at depth $h$, that is later expanded prior to the optimal node at the same depth, is in the near-optimal space $\mathcal{Y}_{m_h\delta(h)}$. 
Now, if we assume that prior to any iteration at depth $h$, $\nu_{\min}\geq m_{h}\delta(h)$, then by Algorithm~\ref{alg:eh}, it takes at most the next $|\mathcal{I}^{m_h\delta(h)}_{h}|$ iterations at depth $h$ to expand the optimal cell \optcell. With this observation at hand, the next question follows naturally: how many iterations at other depths $\in\{0,\ldots,h_{max}\}$ are required---at most---to expand the optimal cell \optcell, and with no assumption on $\nu_{\min}$? In the following lemma, we show that the number of iterations required is upper bounded by the number of nodes in supersets of each of $\{\mathcal{I}^{m_h\delta(h)}_{h}\}_{0\leq h \leq h_{max}}$. The reader can refer to the supplemental material for a pictorial explanation.


\begin{lemma} Let depth $h\in \{0,h_{max}\}$, $\hat{m} = m_{h_{max}}$ and 
\begin{equation}t_h \myeq h_{max}\big( |\mathcal{I}^{\hat{m}\delta(0)}_{0}| + |\mathcal{I}^{\hat{m}\delta(1)}_{1}| + \cdots + |\mathcal{I}^{\hat{m}\delta(h)}_{h}|\big)\;. \label{eq:def_t_h}
\end{equation}
After $t\geq t_h$ iterations, the depth of the deepest expanded optimal node is at least $h$, \ie, $h^*_t\geq h$. 
\label{lem:iter_bound}
\end{lemma}
\begin{proof}
		Refer to the supplemental material.
\end{proof}

Lemma~\ref{lem:iter_bound} quantifies the number of iterations required to expand an optimal node as a function of the number of $\hat{m}\delta(h)$-optimal nodes. Based on Assumption~\ref{assmptn:rounded}, the following lemma upper bounds the cardinality of such nodes.
	
\begin{lemma} Let depth $h\in \{0,h_{max}\}$, we have  $|\mathcal{I}^{\hat{m}\delta(h)}_h| \leq C (\hat{m}\delta(h))^{-\hat{d}}$, where $\hat{d}$ is defined as the $m/\hat{m}$-near-optimality dimension and $C$ the related constant.
	\label{lem:I_bound}
\end{lemma}
\begin{proof}
		Refer to the supplemental material.
\end{proof}
With Lemmas~\ref{lem:iter_bound} and~\ref{lem:I_bound} at hand, the finite-time regret~\eqref{eq:simple-regret}~of the~\ouralg~algorithm can be linked to the number of iterations as presented in the next theorem.
\begin{theorem}
Define $h(t)$ as the smallest $h\geq0$ such that:	
\begin{equation}
Ch_{max}\sum\limits_{l=0}^{h(t)}(\hat{m}\delta(l))^{-\hat{d}} \geq t\;, \label{eq:thm_eh} 
\end{equation}
where $t$ is the number of iterations.
Then \ouralg's regret~is bounded as
$r(t)\leq \min\{\tau(h)+\delta(h)\mid h\leq \min(h(t),h_{max}+1)\}\;.\label{eq:loss_bound}$
\label{thm:eh}	
\end{theorem}
\begin{proof}
	Refer to the supplemental material.
\end{proof}

\section{Empirical Analysis}
\label{sec:empirical}
In this section, the efficacy of the proposed method is empirically validated on a set of scalable functions from the literature: 
the Ellipsoid, FletcherPowell, Rosenbrock, and Ackley test functions~\cite{eliipsoid}, each of which reflects some challenges in black-box optimization, \eg, modality, separability, and conditioning. The proposed optimistic method is also compared with the scaled \alg{SOO} optimistic algorithm~\cite{munos-soo} within two recently presented methods in the random-embedding multiple-run framework, viz. the Simultaneous Optimistic Optimization with Random Embedding~(\alg{RESOO})~\cite{qian2016scaling}  and  Simultaneous Optimistic Optimization with Sequential Random Embedding~(\alg{SRESOO})~\cite{qianderivative} algorithms. With the aim of fully characterizing the algorithms' performance, five experiments are conducted with respect to (w.r.t) the performance as listed in Table~\ref{tbl:exp-setup}.

\vspace{0.5em}
\noindent\textbf{Experiment Setup.} The compared algorithms were implemented in \textit{Python} and the  test functions were imported from the \textit{Optproblems Python package}~\cite{optproblems-package}. Each algorithm is run 20 times
independently per an experiment configuration and the average performance is reported. The experiments were set up as listed in Table~\ref{tbl:exp-setup}. The code/data/supplemental materials of this paper will be made available at the project's website:

\noindent\url{http://ash-aldujaili.github.io/eh-lsopt}.
\begin{table}[]
	\resizebox{0.47\textwidth}{!}{
	\begin{tabular}{p{6.75cm}p{6.5cm}}
		\toprule
		\textbf{Experiment} & \textbf{Setup} \\\toprule
		Convergence: performance w.r.t the number of function evaluations $v$ & $v\in \{10,50,10^2,10^3,10^4,5\times10^4,10^5 \}$\\\midrule
		Scalability: performance w.r.t the problem's dimensionality $n$ & $n\in \{10^2, 5\times 10^2, 10^3, 10^4, 5\times 10^4, 10^5 \}$ \\\midrule
		Embedding Number: performance w.r.t. the number of times a random matrix is sampled $M$ & $M\in \{1, 2, 5, 8, 10, 20 \}$ \\\midrule
		Effective Dimension: performance w.r.t. the problem's implicit dimensionality $d$ & $d\in \{2,5,10,20,50,75 \}$ \\\midrule
		Effective Dimension Knowledge: performance w.r.t the mismatch between  the low dimension used and the actual effective dimension & $d=\{2,5,8,25,75,250\}$, $\mathcal{Y}=[-d/\eta,d/\eta]^{10}$ for \alg{RESOO} and \ouralg~and $[-1,1]^{10}$ for \alg{SRESOO}. \\	\midrule
	\end{tabular}
} \caption{Experiments setup. Unless specified above, we set $v=10^4$, $n=10^4$, $d=10$, and $M=5$. The search space $\mathcal{Y}$ for \alg{RESOO} and \ouralg~was set to $[-d/\eta,d/\eta]^d$ with $\eta=0.3$, \alg{SRESOO}'s $\mathcal{Y}$ was set to $[-1,1]^{d+1}$ as suggested in \cite{qian2016scaling,qianderivative}, respectively. $h_{max}$ was set to the square root of number of function evaluations for each tree. \ie, for \alg{RESOO} and \alg{SRESOO}, $h_{max}=\sqrt{v/M}$; for~\ouralg, $h_{max}=\sqrt{v}$. }\label{tbl:exp-setup}
\end{table}

\vspace{0.5em}
\noindent\textbf{Results \& Discussion.} Results from the five experiments on the four test functions are presented in Figure~\ref{fig:empirical_table}.  
One can easily appreciate \ouralg's performance w.r.t the compared algorithms. 

\textit{Convergence ($v$)}. Across all the tested functions, 
 the performance gap between the algorithms grows larger with higher evaluation budget $v$. With more function evaluations, \ouralg~is able to further refine its best achieved solution in comparison with the other algorithms.

\textit{Scalability ($n$)}. As expected, the best solution quality degrades with the problem's explicit dimensionality $n$. Nevertheless, the performance of \alg{SRESOO} looks robust yet poorer than that of \ouralg~and \alg{RESOO}.

 \textit{Embedding Number ($M$)}. Allocating more independent runs seems to be effective for \alg{RESOO}'s and of lesser effect to \alg{SRESOO}'s performance. As $M$ approaches  $v$, both the algorithms act as random search optimizers. This is not the case for \ouralg, which uses $M$ as a multiplicative factor to bound the number of evaluations a base point $\vy_{h,i}$ can have up to $M\cdot ||\vy_{h,i}||$.~\ouralg~uses $M$ as an estimate of Theorem~\ref{lem:gy_def}'s bound factor $\sqrt{8}\cdot L$. Thus, there's a sweet-spot value for each function, which explains the regret's variations for each function in $M$.

 \textit{Effective Dimension ($d$)}. The results validate the assumption of low effective dimensionality for the suitability of random embedding technique for large-scale problems. It does not scale well with higher effective dimensionality and the algorithms' performance gap reduces w.r.t. the same.

\textit{Knowledge of the Effective Dimension}. This experiment investigated the performance of low-dimensional embedding irrespective of the effective dimension---be it higher or lower (see Table~\ref{tbl:exp-setup}).   
As the mismatch between the two quantities increases, 
the performance degrades and the gap among the algorithms reduces.

\section{Conclusion}
\label{sec:conclusion}
This paper has presented the \ouralg~algorithm, a different approach to random embedding for large-scale black-box optimization. While the bulk of random embedding techniques in the literature employ the multiple-run paradigm sampling a new random projection for each run to maximize the probabilistic guarantee of convergence to the optimal solution, \ouralg~looks for the optimal solution by building stochastic hierarchical bandits (so-called a tree) over a low-dimensional search space $\mathcal{Y}$, where stochasticity has shown to be proportional on average with the norm of the nodes' base points. 

The distinctive advantage of \ouralg~is that its search tree implicitly ranks $\mathcal{Y}$'s regions via its depth/norm-wise visits and allocates the evaluation budget accordingly. Indeed, other algorithms (e.g., \alg{RESOO}) may evaluate $\mathcal{Y}$'s center--which is a zero vector--$M$ times in its $M$ independent tree searches/projections. This is inefficient as $M$ evaluations are spent generating the same function value, whereas~\ouralg~evaluates the zero vector once and reserves the rest ($M-1$) evaluations to points with greater norms, exploring more values in the function space.

The finite-time analysis of the algorithm has characterized its performance in terms of the regret as a function of the number of iterations. Besides its theoretically-proven performance, the numerical experiments have validated \ouralg's efficacy and robustness with regard to recent random-embedding methods.

\section{ Acknowledgments}
The research was partially supported by the ST Engineering - NTU Corporate Lab, Singapore, through the NRF corporate lab@universty scheme.

\bibliographystyle{aaai}
\bibliography{refs}

\end{document}


\maketitle



This document is a supplement to the paper titled "\emph{Embedded Bandits for Large-Scale Black-Box Optimization}", which we will refer to as the main paper subsequently. The objective of this material is to present some remarks, visual description and proofs to the discussion and results in the main paper.
\appendix

\renewcommand{\thesubsection}{A.\arabic{subsection}}

\subsection{On the function values difference within same/multiple projections} 
The “same projection/among different projections” concepts are incorporated in the algorithm via its depth/norm-wise visits and the limit on the number of function evaluations per base point. In Page 4 of the main paper, these concepts are related to the algorithm’s aspects. They were presented separately in Page 3 of the main paper to indicate the existence of two exploration-exploitation trade-offs: (in one, among multiple) projection(s). 

\subsection{How do \alg{RESOO}, \alg{SRESOO}, and \ouralg~use their evaluation budget $v$?}
Given $v$ function evaluations,1). \alg{RESOO}: $M$ random matrices are used in $M$ independent SOO-searches, each with $v/M$ evaluations; 2). \alg{SRESOO}: $M$ random matrices are used in $M$ sequential SOO-searches, each with $v/M$ evaluations, where the ($s+1$)th search optimizes an augmented objective function based on the best solution of the $s$th search; 3). Instead of multiple tree searches, \ouralg~consolidates the search in a single tree with $v$ evaluations but allows to evaluate a single base point $\vy_{h,i}$ multiple times (each with a new sample of the projection matrix). The number of evaluations a point can have is proportional to its norm (Theorem 1). To relate to the embedding number $M$, a base point can be evaluated as long as the number of its past evaluations is not greater than $M\cdot ||\vy_{h,i}||$ simulating  Theorem 1's bound, $\sqrt{8}\cdot L\cdot||\vy_{h,i}||$. 

\subsection{On the Generality of Assumption 1}

	The class of functions that satisfies the Lipschitz condition is very broad. In fact, it has been shown in \cite{Pinter1995,GablonskyModirect} that among the Lipschitz-continuous functions are convex/concave functions over a closed domain and continuously differentiable functions.

\subsection{Visual Description and Proof of Lemma 1}

\begin{figure}[]
	\centering
	\begin{tabular}{c}
	\includegraphics[width=0.95\textwidth]{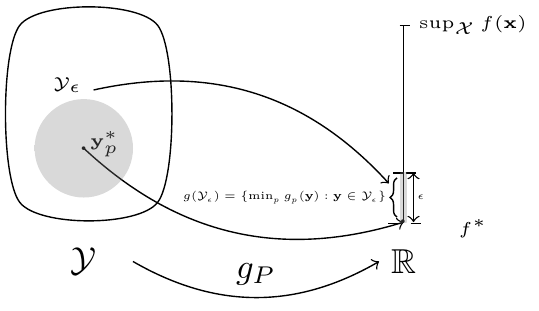}\\ (a) \\ \includegraphics[width=0.95\textwidth]{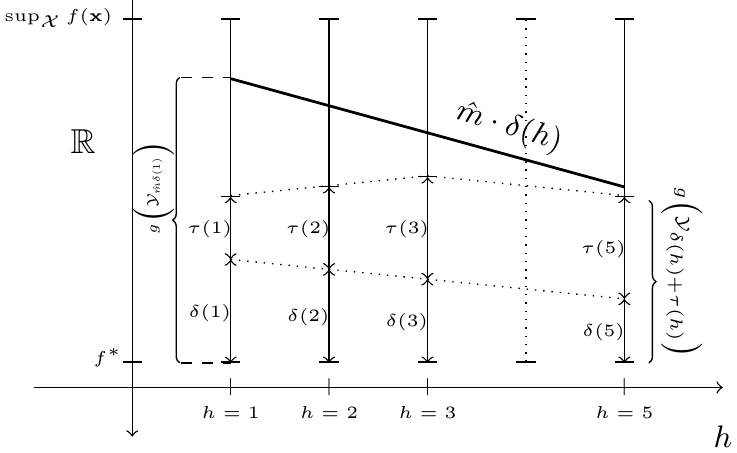}	\\ (b)\\
	\end{tabular}		
	\caption{(a) The volume of the near-optimal search space at depth $h$ in \ouralg~where $\epsilon=\tau(h)+\delta(h)$. A mapping from the low-dimensional search space $\mathcal{Y}$ to the space of objective function $f$ can be obtained via a realization $g_p$ of the random function $g_P$. (b) The range of function values for the $\tau(h)+\delta(h)$-optimal nodes $\mathcal{I}^{\tau(h)+\delta(h)}_h$ for $h=\{1,2,3,5\}$. One can upper bound the sequence $\{\tau(h)+\delta(h)\}_{0\leq h\leq h_{max}}$ by the decreasing sequence $\{\hat{m}\delta(h)\}_{0\leq h \leq h_{max}}$, where $\hat{m}=m_{h_{max}}$.}\label{fig:tau-delta}
\end{figure}

Consider Figure~\ref{fig:tau-delta}, where all nodes at depth~$h=0$~have been expanded with out loss of generality. One can see that if all the nodes in $\mathcal{I}^{m_1\delta(1)}_1$ have been expanded, then in later iterations at depth $h=1$ nodes from $\mathcal{L}_{t,1}\setminus \mathcal{I}^{m_1\delta(1)}_1$ will be expanded. As a result, prior to iterations at depth $h=2$, the minimum value, $\nu_{\min}$ can have, is $f^*+m_1\delta(1)$. Since $\tau$ is non-decreasing in $h$, $m_1\leq m_2$ and it is possible that there exists some node $(h,i)$ in $\mathcal{I}^{m_2\delta(2)}_{2}$ such that $\nu_{\min} \leq f^*_{h,i} \leq f^*+m_2\delta(2)$, and hence it will not be expanded. In other words, we are certain that all the nodes in $\mathcal{I}^{m_2\delta(2)}_{2}$ will be expanded if $\nu_{\min}$---prior to any iteration at depth $h=2$---is greater than  $f^*+m_2\delta(2)$.
In the light of this observation, the following lemma is deduced.
\begin{lemma} Let depth $h\in \{0,h_{max}\}$, $\hat{m} = m_{h_{max}}$ and 
	\begin{equation}t_h \myeq h_{max}\big( |\mathcal{I}^{\hat{m}\delta(0)}_{0}| + |\mathcal{I}^{\hat{m}\delta(1)}_{1}| + \cdots + |\mathcal{I}^{\hat{m}\delta(h)}_{h}|\big)\;. \label{eq:def_t_h}
	\end{equation}
	After $t\geq t_h$ iterations, the depth of the deepest expanded optimal node is at least $h$, \ie, $h^*_t\geq h$. 
	\label{lem:iter_bound}
\end{lemma}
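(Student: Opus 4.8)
The plan is to prove the bound by induction on $h$ from $0$ up to $h_{max}$, after first replacing the level-dependent radii $m_h\delta(h)$ by the uniform surrogate $\hat m\delta(h)$. Because the sequence $m_h$ is non-decreasing we have $m_h\le\hat m$ for every $h\le h_{max}$, so $\mathcal{I}^{m_h\delta(h)}_h\subseteq\mathcal{I}^{\hat m\delta(h)}_h$ and proving expansion of the larger sets is only harder; the payoff is that, since $\delta$ is decreasing, the thresholds now satisfy $\hat m\delta(0)>\hat m\delta(1)>\cdots>\hat m\delta(h_{max})$, exactly the monotone envelope drawn in Figure~\ref{fig:tau-delta}(b). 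I would also push the stochastic, projection-based part of the algorithm entirely into the near-optimality classification: a cell $(h,i)$ is declared near-optimal precisely when its $\tau(h)+\delta(h)$-corrected value lies within $\hat m\delta(h)$ of $f^*$, so that everything downstream is a deterministic counting of expansions on the tree.

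First I would fix the bookkeeping of a sweep. In one pass from depth $0$ to $h_{max}$ the algorithm expands at most one leaf per depth, hence at most $h_{max}$ nodes per round, and I count one such expansion as one iteration. Writing $N_h\myeq\sum_{j=0}^{h}|\mathcal{I}^{\hat m\delta(j)}_j|$ so that $t_h=h_{max}N_h$, the goal becomes to show that all $N_h$ near-optimal nodes at depths $\le h$ are expanded within $N_h$ rounds. The engine is the following clean invariant, which is the formal content of the figure: once all near-optimal nodes at depths $0,\dots,j-1$ have been expanded, every leaf still present at those depths is non-near-optimal and therefore carries a value exceeding $f^*+\hat m\delta(j-1)>f^*+\hat m\delta(j)$; hence by the time a sweep reaches depth $j$ the running threshold $\nu_{\min}$ (the minimum of the values selected so far in the sweep) satisfies $\nu_{\min}>f^*+\hat m\delta(j)$.

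Next I would use the invariant to show that every round makes progress until the sets are exhausted. Let $j^\star\le h$ be the shallowest depth still containing an unexpanded near-optimal node. That node is a leaf of value at most $f^*+\hat m\delta(j^\star)$, so it is the minimal-value leaf at depth $j^\star$, and by the invariant $\nu_{\min}>f^*+\hat m\delta(j^\star)$ when the sweep arrives at $j^\star$; the expansion test is therefore passed and a fresh near-optimal node is expanded. Thus each round removes at least one node from $\bigcup_{j=0}^{h}\mathcal{I}^{\hat m\delta(j)}_j$, so after at most $N_h$ rounds, i.e.\ within $h_{max}N_h=t_h$ iterations, all of them---in particular the optimal cell at depth $h$, whose parent is an expanded optimal (hence near-optimal) cell at depth $h-1$ by the inductive hypothesis---have been expanded, giving $h^*_t\ge h$. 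The base case $h=0$ is immediate, since the root is the optimal cell at depth $0$ and is expanded in the first round.

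The step I expect to be the main obstacle is not this deterministic accounting but the justification that the corrected values used in the expansion test respect the near-optimality classification, i.e.\ that a cell labelled non-near-optimal genuinely cannot drag $\nu_{\min}$ below $f^*+\hat m\delta(j)$ once the projected evaluations are taken into account. This is where the multiple-evaluation-per-base-point mechanism and the bound of Theorem~1 must be invoked, to guarantee that the $\tau(h)+\delta(h)$ correction brackets the true cell value tightly enough that the deterministic inequalities above hold for the quantities the algorithm actually compares. I would isolate this as a separate claim relating observed projected values to $f^*_{h,i}$ through $\tau$ and $\delta$, and then feed it into the counting argument above.
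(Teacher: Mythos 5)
Your argument is correct and follows essentially the same route as the paper's proof: induction over the depth, the invariant that $\nu_{\min}>f^*+\hat{m}\delta(\hat{h}+1)$ once all shallower near-optimal sets $\{\mathcal{I}^{\hat{m}\delta(j)}_{j}\}_{j\le\hat{h}}$ are exhausted (using that $\hat{m}\delta(h)$ is decreasing), and the count of $h_{max}$ iterations per near-optimal expansion to reach $t_h$. The stochastic step you flag at the end---relating the projected, $\tau(h)+\delta(h)$-corrected values to the near-optimality classification---is exactly the point the paper discharges by citing Eq.~(7) of the main paper rather than re-proving it in the supplement.
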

\begin{proof}
	First, the lemma holds trivially for $h=0$ as $h^*_t\geq 0$. For $h>0$, the proof is presented by induction. To this end, let the lemma holds for all $h\leq \hat{h}<h_{max}$, and we need to show it holds for $\hat{h}+1$. Assume that $t_{\hat{h}+1}$ iterations have been performed, that is to say, the present iteration is $t\geq t_{\hat{h}+1}$. As $t\geq t_{\hat{h}+1} \geq t_{\hat{h}}$, the induction assumption implies that $h^*_t \geq \hat{h}$. Furthermore, the induction assumption implies that $\nu_{\min}> \hat{m}\delta(\hat{h})$ prior to any iteration at depth $\hat{h}+1$ because all the nodes in $\{\mathcal{I}^{\hat{m}\delta(h)}_{h}\}_{0\leq h \leq \hat{h}}$ have been expanded in previous iterations. From Eq.~(7) of the main paper 
	  and the definition of $\mathcal{I}^{m_{\hat{h}+1}\delta(\hat{h}+1)}_{\hat{h}+1}$, we are certain that $h^*_t\geq \hat{h}+1$ if all the nodes in $\mathcal{I}^{m_{\hat{h}+1}\delta(\hat{h}+1)}_{\hat{h}}$ have expanded. To guarantee their expansions, we need an additional total number of iterations across all depths greater than or equal to $|\mathcal{I}^{\hat{m}\delta(\hat{h}+1)}_{\hat{h}+1}|\geq |\mathcal{I}^{m_{\hat{h}+1}\delta(\hat{h}+1)}_{\hat{h}+1}|$ (see Figure~\ref{fig:tau-delta}) times the tree depth  $h_{max}$ (one iteration per depth). In total, the number of iterations is equal to $t_{\hat{h}+1}$ and thus $h^*_t\geq \hat{h}+1$. 
\end{proof}

\subsection{Proof of Lemma 2}
\begin{lemma} Let depth $h\in \{0,h_{max}\}$, we have  $|\mathcal{I}^{\hat{m}\delta(h)}_h| \leq C (\hat{m}\delta(h))^{-\hat{d}}$, where $\hat{d}$ is defined as the $m/\hat{m}$-near-optimality dimension and $C$ the related constant.
	\label{lem:I_bound}
\end{lemma}

\begin{proof}
	The proof is made by contradiction. To this end, assume there exists some $h \in \{0,h_{max}\}$ such that $|\mathcal{I}^{\hat{m}\delta(h)}_h| > C (\hat{m}\delta(h))^{-\hat{d}}$. On the one hand,
	the definition of $\mathcal{I}^{\hat{m}\delta(h)}_h$ indicates that their base points are in $\mathcal{Y}_{\hat{m}\delta(h)}$. On the other hand, Assumption~3 of the main paper 
	indicates that cells of nodes at depth $h$ contain a ball of radius $m\delta(h)=\frac{m}{\hat{m}}\cdot \hat{m} \delta(h)$. Since the cells are disjoint and from Definition 2 of the main paper,
	 we have a contradiction with $\hat{d}$ being the $m/\hat{m}$-near-optimality dimension.
\end{proof}

\subsection{Proof of Theorem 2}
\setcounter{theorem}{1}
\begin{theorem}($r(t)$ for \ouralg) 
	Let us define $h(t)$ as the smallest $h\geq0$ such that:	
	\begin{equation}
	Ch_{max}\sum\limits_{l=0}^{h(t)}(\hat{m}\delta(l))^{-\hat{d}} \geq t \label{eq:thm_eh}
	\end{equation}
	where $t$ is the number of iterations.
	Then the regret of \ouralg~is bounded as:
	\begin{equation}
	r(t)\leq \min\{\tau(h)+\delta(h)\mid h\leq \min(h(t),h_{max}+1)\}\;.\label{eq:loss_bound}
	\end{equation}
	\label{thm:eh}	
\end{theorem}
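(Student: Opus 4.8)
The plan is to turn the iteration budget $t$ into a guaranteed expansion depth using Lemmas~\ref{lem:iter_bound} and~\ref{lem:I_bound}, and then to convert that depth into a regret certificate through the per-depth accuracy $\tau(h)+\delta(h)$.

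First I would eliminate the cardinalities from the definition of $t_h$. Substituting the bound $|\mathcal{I}^{\hat{m}\delta(l)}_l|\leq C(\hat{m}\delta(l))^{-\hat{d}}$ of Lemma~\ref{lem:I_bound} termwise into Eq.~\eqref{eq:def_t_h} gives
\begin{equation}
t_h \;\leq\; Ch_{max}\sum_{l=0}^{h}(\hat{m}\delta(l))^{-\hat{d}}\;,
\end{equation}
so the summation appearing on the left of the defining inequality~\eqref{eq:thm_eh} upper-bounds $t_h$ for every $h$. Next I would read the depth guarantee off the definition of $h(t)$: because $h(t)$ is the \emph{smallest} index at which the partial sum reaches $t$, every depth $h<h(t)$ satisfies $Ch_{max}\sum_{l=0}^{h}(\hat{m}\delta(l))^{-\hat{d}}<t$, hence $t_h<t$ by the inequality above. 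For each such $h$, Lemma~\ref{lem:iter_bound} certifies $h^*_t\geq h$; pushing $h$ as large as this argument allows shows that after $t$ iterations the deepest expanded optimal node has depth at least $h(t)$ up to the standard $\pm1$ boundary bookkeeping, while the finite tree depth caps $h^*_t$ at $h_{max}$ — together these explain the $\min(h(t),h_{max}+1)$ in the statement.

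Third, the crux, I would convert this depth into the regret bound~\eqref{eq:loss_bound}. The key observation is that reaching depth $h^*_t$ forces an optimal node to have been expanded at \emph{every} depth $0\leq h\leq h^*_t$ (the ancestors of the deepest optimal node are themselves optimal nodes), and that expanding an optimal node at depth $h$ certifies the incumbent's regret is at most $\tau(h)+\delta(h)$: the $\delta(h)$ term comes from the cell diameter together with the local-smoothness/near-optimality machinery (Assumption~1, Assumption~3, and Eq.~(7) of the main paper), while the $\tau(h)$ term accounts for the gap introduced by the random projection $g_P$ mapping the low-dimensional space $\mathcal{Y}$ into the objective $f$. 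Since \ouralg~returns the best point seen, its regret is at most the minimum of these per-depth certificates over all certified depths; because a smaller range yields a larger minimum, replacing the range $[0,h^*_t]$ by the guaranteed subrange only weakens the bound in a controlled way, which together with the depth guarantee yields $r(t)\leq\min\{\tau(h)+\delta(h)\mid h\leq\min(h(t),h_{max}+1)\}$.

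I expect the third step to be the main obstacle: justifying the $\tau(h)+\delta(h)$ certificate in the embedded setting, where the objective is accessed only through a realization of the random function $g_P$, and handling the non-monotonicity of $h\mapsto\tau(h)+\delta(h)$. Since $\tau$ is non-decreasing while $\delta$ is decreasing in $h$, a deeper expansion does not automatically improve the accuracy, which is precisely why the statement takes a minimum over all depths up to $\min(h(t),h_{max}+1)$ rather than evaluating the certificate at the single depth $h^*_t$.
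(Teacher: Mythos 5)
Your proposal follows essentially the same route as the paper's own proof: bounding $t_{h(t)-1}<t$ by substituting Lemma~\ref{lem:I_bound} into Eq.~\eqref{eq:def_t_h}, invoking Lemma~\ref{lem:iter_bound} together with the $h_{max}$ cap to get $h^*_t\geq\min(h(t)-1,h_{max})$, and then reading the regret off the $\tau(h)+\delta(h)$ certificates along the expanded optimal branch up to depth $\min(h(t),h_{max}+1)$. Your closing remark on the non-monotonicity of $h\mapsto\tau(h)+\delta(h)$ correctly explains why the bound is a minimum over all certified depths, which is exactly the point the paper makes via its Figure~1(b).
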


\begin{proof}
	From the definition of $h(t)$ and Lemma~\ref{lem:I_bound}, a bound on $t_{h(t)-1}$ of Eq.~\eqref{eq:def_t_h} can be written as follows.
	\begin{eqnarray}
	t_{h(t)-1} &= &h_{max} \sum_{l=0}^{h(t)-1}|\mathcal{I}^{\hat{m}\delta(l)}_{l}| \nonumber\\
	&\leq &Ch_{max}\sum\limits_{l=0}^{h(t)-1}(\hat{m}\delta(l))^{-\hat{d}} \nonumber\\
	&<&t\;.\nonumber
	\end{eqnarray}
	Then, by Lemma~\ref{lem:iter_bound} and the fact that \ouralg~does not expand nodes beyond $h_{max}$,  $h^*_t\geq \min(h(t)-1,h_{max})$.  Thus, we know that the optimal branch of nodes $\{\mathcal{Y}_{h,i^*_p}\}_{0\leq h \leq \min(h(t),h_{max}+1)}$ have been visited and evaluated at least once and for which the best function value achieved among $\{f^*_{h,i^*_p}\}_{0\leq h \leq \min(h(t),h_{max}+1)}$ is at most $\min\{\tau(h)+\delta(h)\mid h\leq \min(h(t),h_{max}+1)\}$ away from the optimal value $f^*$. Therefore, the regret of the algorithm is upper bounded as in Eq.~\ref{eq:loss_bound}.\end{proof}
\bibliographystyle{plain}
\bibliography{refs}